\newcommand\norm[1]{\left\lVert#1\right\rVert}
\newcommand\R{\mathbb{R}}
\newcommand\A{\mathcal{A}}
\DeclareMathOperator{\pow}{pow}
\newtheoremstyle{style}
  {}
  {}
  {\itshape}
  {}
  {\bfseries}
  {.}
  { }
  {}
\theoremstyle{style}
\newtheorem{definition}{Definition}
\newtheorem{problem}{Problem}
\newtheorem{proposition}{Proposition}
\def\hb{{\bar{h}}}
\def\fb{{\bar{f}}}
\def\ifs{{\mathcal{I}}}
\def\is{{\iota}}
\def\M{{\mathcal{M}}}
\newcommand{\suppar}[1]{^{(#1)}}
\title{\LARGE \bf
Optimal Control of Sensor-Induced Illusions on Robotic Agents
}
\author{Lorenzo Medici$^1$, Steven M. LaValle$^1$ and Ba\c{s}ak Sak\c{c}ak$^{1,2}$
\thanks{This work was supported by a European Research Council Advanced Grant (ERC AdG, ILLUSIVE: Foundations of Perception Engineering, 101020977), Academy of Finland (projects BANG! 363637, CHiMP 342556).}
\thanks{$^{1}$Faculty of Information Technology and Electrical Engineering, University of Oulu, 
{\tt\small firstname.lastname@oulu.fi}}%
\thanks{$^{2}$Department of Advanced Computing Sciences, Maastricht University {\tt\small basak.sakcak@maastrichtuniversity.nl}}
}
\begin{document}

\maketitle
\thispagestyle{empty}
\pagestyle{empty}

\begin{abstract}
This paper presents a novel problem of creating and regulating localization and navigation illusions considering two agents: a receiver and a producer. A receiver is moving on a plane localizing itself using the intensity of signals from three known towers observed at its position. Based on this position estimate, it follows a simple policy to reach its goal. The key idea is that a producer alters the signal intensities to alter the position estimate of the receiver while ensuring it reaches a different destination with the belief that it reached its goal. We provide a precise mathematical formulation of this problem  and show that it allows standard techniques from control theory to be applied to generate localization and navigation illusions that result in a desired receiver behavior. 
\end{abstract}

\section{INTRODUCTION}
Humans experience illusions when sensory organs are exposed to stimuli that, when processed by the nervous system, result in a perceptual experience that does not coincide with the reality.
For example, the famous Ponzo illusion results from two identical lines being perceived as having different length \cite{ponzo1928urteilstauschungen,yildiz2022review}. 
Analogous to the human sensory organs and nervous system are the sensors and the processor of a robotic agent. The sensors provide measurements of the agent's body and the physical world, and a processor
condenses these measurements into an internal state that represents the state of the physical world. Therefore, robotic agents are susceptible to illusions similar to how humans are.

In this work, we consider robotic agents that can sense and act. We are interested in controlling the perception of a robotic agent, that is, controlling its internal state.  
Typically, it is not possible to alter the internal state directly, therefore, appropriate sensor measurements must be generated by altering the physical world that lead to a desired internal state. The agent determines its actions based on its internal state. Therefore, by altering its internal state, a desired behavior can be achieved. An important issue is to ensure that the generated perception is plausible, that is, it is consistent with the agent's model of the physical world. 

\begin{figure}
    \centering
\begin{subfigure}{0.52\linewidth}
            \includegraphics[width=.85\linewidth]{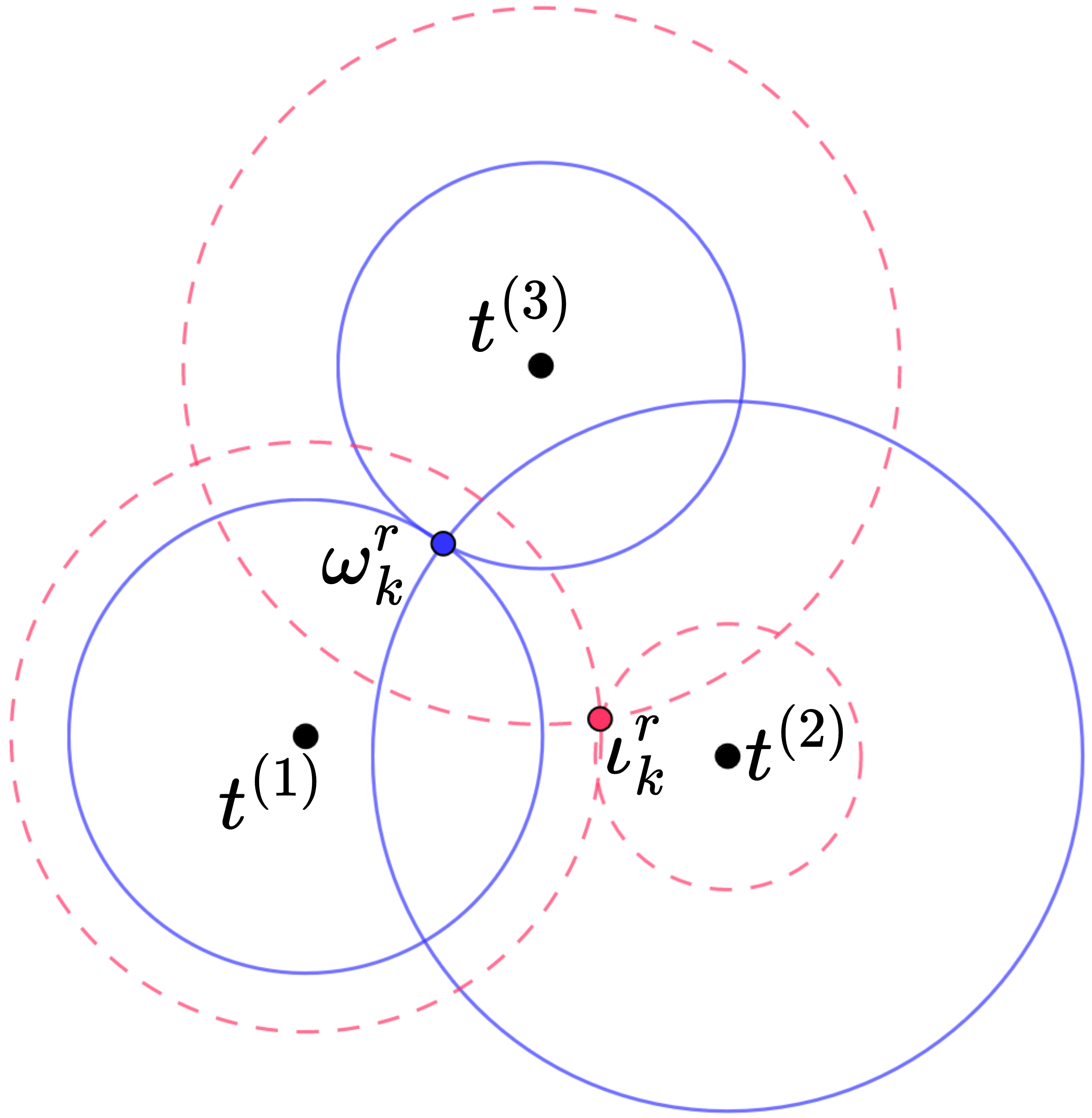}
        \caption{}
        \label{fig:tower-spoofing-figure}
    \end{subfigure}
\begin{subfigure}{0.45\linewidth}
    \includegraphics[width=.99\linewidth, trim={0cm 0cm 0cm 0.5cm},clip]{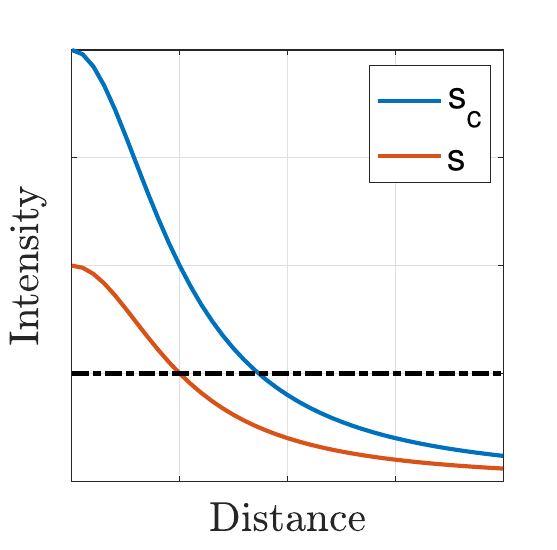}
    \caption{}
    \label{fig:signal-dropoff-models}
\end{subfigure}

    \caption{Localization illusions. (a) A receiver is placed at $\omega^r_k$ and the signal intensity is set to $s^{(i)}$ at the source for towers at $t^{(i)}$ for $i=1,2,3$. Considering the model of signal transmission with the signal intensity at source as $s_c$, the measured intensity at receiver position results in erroneous position estimate (intersection of red dashed circles). (b) Models of signal intensity as a function of distance from tower position (source). Blue line is the receiver model. Changing the signal intensity to different values leads to the perceived distance to the tower be different than the actual distance (see the intersection of the black line with the actual model and the receiver model).}
    \label{fig:changed-transmission-powers}
\vspace{-1.5em}
\end{figure}

Many of the ideas in this work will build upon \cite{LaValle-Big-PE}, in which a general mathematical formulation was introduced to model the process of altering the perceptual experience of an agent. In this work, we show how these ideas can be applied to pose and solve robotics problems.
The concrete problem that we address in this paper is creating localization and navigation illusions for an agent, a receiver, moving on a plane and localizing itself by trilateration using the intensity of signals emitted from three towers measured at agent position (see Fig.~\ref{fig:changed-transmission-powers}). The agent sensor model is based on inverse-square law parametrized with signal intensity at tower position used for calibration.
Based on this model, the agent can uniquely determine its position as the intersection of the level circles corresponding to measured intensities. However, setting the signal intensity at tower base different than the calibrated value results in a position estimate that is different than the agent's actual position. 
We consider a receiver agent with this type of localization following a state-feedback policy that leads the agent to its goal under correct localization. We address the problem of designing an agent, a producer, which controls the signal intensities of the towers. The goal of the producer is to ensure that the receiver reaches a goal that is different than its own while believing that it has reached its destination. 
We model the receiver dynamics together with its internal state, that is, the position estimate and design a regulator (producer) that stabilizes this system while considering plausibility constraints.

\subsection{Related work}
Generating stimuli to alter the perception of agents (biological or robotic) has received a sustained interest from different fields with applications ranging from designing virtual or augmented reality systems for training \cite{krosl2019icthroughvr} to studying animal behavior \cite{takalo2012fast}. 
Considering robotic agents, 
\cite{Suomalainen-virtual-reality-robots} has introduced the notion of \emph{virtual reality for robots} (VRR) presenting a general and formal way of describing virtual reality notions such as displays, virtual worlds, and rendering for robotics. Their notion of VRR encompasses physics-based simulators and sensor spoofing. 

Sensor spoofing literature, which deals with the problem of generating or mitigating adversarial attacks to a sensing system, is closely related to this paper. 
It has been shown that most sensors used in designing autonomous systems such as Inertial Measurement Units (IMUs)~\cite{son2015rocking,trippel2017walnut}, Light Detection and Ranging (LiDAR)~\cite{shin2017illusion} and Radar~\cite{Komissarov-radar-spoofing-attack, Kapoor-radar-spoofing-detection} sensors, and Global Navigation Satellite Systems (GNSS)~\cite{Kerns-GPS-spoofing, Dasgupta-GNSS-spoofing-detection} are susceptible to adversarial attacks which either render the sensor unavailable or purposefully generate false measurements that cause the system to deviate from its intended functioning. Similar to our work, in \cite{Kerns-GPS-spoofing}, the authors consider designing a \emph{spoofer} which alters the GNSS readings of a hostile Unmanned Aerial Vehicle (UAV) causing unrecoverable navigation errors and leading it to crash. Other than designing methods to spoof sensors, there is also an extensive literature on how to mitigate such attacks which deals with determining the plausibility of a sensor reading and designing robust control techniques with respect to potential attacks \cite{davidson2016controlling,jafarnia2012gps}. In this work, when designing a producer we will take plausibility of measurements with respect to receiver model into account but we will assume that the receiver is not equipped with anti-spoofing mechanisms. 

Designing simulators for testing the behavior of a robotic system is related as well. Similar to sensor spoofing, the objective of a simulator is also to generate stimuli. However, in the latter case the stimuli is expected to sufficiently mirror the expected real world scenarios without the agent actually being deployed in the real world.
The literature on simulation is vast and range from designing simulators to test the functioning of hardware \cite{bacic2005hardware} to test control and optimization software \cite{todorov2012mujoco,craighead2007survey}.
An interesting instance appears in \cite{Shell-reality-simulation-reality}, in which Shell and O'Kane have addressed the limits on curating a physical world given infrastructure constraints for simulation purposes. 

\section{Preliminary Definitions}
In this section, we briefly review the relevant background and define notions central to the paper.

\subsection{Agents in a universe} 

We model an agent capable of sensing and acting in an environment shared with other agents as two coupled systems named \emph{internal} and \emph{external} systems, similar to previous work \cite{sakcak2024mathematical}. An internal system is seen as a centralized computation component that processes sensor observations and actions. Consequently, an external system is the physical world. Although our notion of an internal system is general, for the scope of this paper, we will consider an internal system which maintains a representation of the external system state. 
To this end, we assume that the internal system of the agent is \emph{model-based} coinciding with the use of the term in robotics and control-theory. We call this model an \emph{intrinsic model} of the external system since it describes the external system from the agent's perspective. 

\begin{definition}[Intrinsic model] An intrinsic model of the external system corresponding to the sets of actions $U$ and observations $Y$ is a tuple $\M_{U\rightarrow Y} = (X, f, h)$ consisting of 
\begin{itemize}
    \item Set of states $X$. 
    \item A nondeterministic X-state transition function (XTF) $f : X \times U \rightarrow \pow(X) \setminus \emptyset$, in which, $\pow(X)$ denotes the power set of $x$, such that $f(x_k, u_k)$ is the set of possible states that can be reached at stage $k+1$ when an action $u_k$ is taken at state $x_k$ at stage $k$. 
    \item A sensor-mapping $h : X \rightarrow \pow(Y)$ such that $y_k \in h(x_k)$ is an observation that an agent receives when at state $x_k$ at stage $k$.
\end{itemize}
\end{definition}

We use $U \rightarrow Y$ in the subscript of $\M_{U\rightarrow Y}$ to emphasize that the model captures the input-output relationship between the actions applied to and observations received from the external system. 
In case the images of $f$ and $h$ are always singletons, $\M_{U\rightarrow Y}$ is a deterministic model of the external system. In this paper, we will only consider nondeterministic models together with deterministic models as a special case.
However, probabilistic formulations are also possible, see for example \cite{LaValle-Big-PE,sakcak2024mathematical}.

The next definition makes the term \emph{agent} precise within this model-based context, which corresponds to an internal system and an intrinsic model of the external system. The internal system state is an \emph{information state} (I-state) according to the definition introduced in \cite[Chapter 11]{LaValle-Planning-Algorithms} which is based on the von Neumann-Morgenstern notion of information for games with imperfect information.

\begin{definition}[Agent] An agent $\A$ is the tuple $\A = (\M_{U \rightarrow Y}, \ifs, U, Y, \phi, \pi)$, in which
\begin{itemize} 
    \item $\ifs$ is the set of information states. 
    \item $U$ and $Y$ are the sets of actions and observations, respectively. 
    \item Transition from an I-state $\is_{k-1}$ to the next one $\is_{k}$ with the receipt of $y_k$ is described by the information-transition function (ITF) $\phi : \ifs \times Y \rightarrow \ifs$ so that $\is_k = \phi(\is_{k-1},y_k)$.
    \item A policy $\pi : \ifs \rightarrow U$ determines an action to take at each I-state.
\end{itemize}
\end{definition}

An intrinsic model is an approximation of the external system, that is, it does not necessarily coincide with the ``reality". However, to define illusions, we need to rely on a third-person or godlike perspective. 
The reality for an agent is described as a model as well, which we call the \emph{extrinsic model}.
Let $\Omega$ be the set of all possible states of the external system, containing all the agents, from this godlike perspective, named the \emph{universe space}.

\begin{definition}[Extrinsic model] An extrinsic model for $\A\suppar{i}$ is a tuple $$\left(\Omega, \fb, U, \hb\suppar{i}, Y\suppar{i}\right)$$ consisting of: 
\begin{itemize}
    \item A set of \emph{universe states} $\Omega$. 
    \item A set of \emph{actions} $U = U^{(1)} \times U^{(2)} \times \dots \times U^{(N)}$, in which $U\suppar{n}$, $n=1,\dots, N$, is the action set of $n^{\text{th}}$ agent.
    \item A \emph{universe transition function} (UTF) $\fb : \Omega \times U \rightarrow \Omega$ such that $$\omega_{k+1} = \fb(\omega_k, u_k),$$
    in which the subscripts indicate the stage indices. 
    \item A sensor mapping $\hb\suppar{i} : \Omega \rightarrow Y\suppar{i}$ and set of observations $Y\suppar{i}$.
\end{itemize} 
\end{definition}

\subsection{Producer and receiver agents}

In this work, we consider two special agents: a \emph{producer} and a \emph{receiver}. 
A producer agent delivers a targeted perceptual experience to a receiver agent.

\begin{definition}[Receiver] A receiver is an agent $\A^r = (\M_{U^r \rightarrow Y^r}, \ifs^r, U^r, Y^r, \phi^r, \pi^r)$. An extrinsic model for the receiver is the tuple $(\Omega, \fb, U, \hb^r, Y^r)$, in which $U = U^p \times U^r$, and $U^p$ is the set of producer actions.
\end{definition}

We will consider an omniscient producer which means that its intrinsic model coincides with the extrinsic model and its sensor mapping is a bijection revealing the full state of the universe. The next definition makes this precise. 

\begin{definition}[Producer]\label{def:producer} Let $(\Omega, \fb, U, \hb^p, Y^p)$ be an extrinsic model for the producer such that $U=U^p \times U^r$, $Y^p = \Omega$ and $\hb^p$ is an identity mapping, that is, $y^p=\hb^p(\omega)=\omega$.
A producer is an agent $\A^p = (\M_{U^p \rightarrow Y^p}, \ifs^p, U^p, Y^p, \phi^p, \pi^p)$ 
such that the intrinsic model $\M_{U^p \rightarrow Y^p}=(X^p, f^p, h^p)$ satisfies that $X^p = \Omega$, $f^p = \fb$, and $h^p = \hb^p$. 
\end{definition} 

Fig.~\ref{fig:int-ext} shows these two agents coupled to the same universe. Altering the perception of a receiver implies determining actions $u^p$ which alter the universe state and causing a change in receiver I-state. A receiver behavior that is different than its intended one can be achieved by a careful selection of producer actions that result in receiver actions through receiver I-states and policy.

\subsection{Defining plausibility and illusions}
\label{sec:plausibility-illusions}

Illusions occur if the intrinsic model of the receiver does not coincide with its extrinsic model. In \cite{LaValle-Big-PE}, we have introduced a general way to model the correspondence between these models from a third-person perspective. 
Let $\A^r = (\M_{U^r \rightarrow Y^r}, \ifs^r, U^r, Y^r, \phi^r, \pi^r)$ be a receiver and let $(\Omega, \fb, U, \hb^r, Y^r)$ be its extrinsic model.

\begin{definition}[Correspondence]\label{def:correspondence} Let $C \subset X^r \times \Omega$ be called a \emph{correspondence relation}. If $(x^r, \omega) \in C$, then we say that $x^r$ corresponds to $\omega$. If $C$ is one to many and onto, then there exists a correspondence function $\alpha : \Omega \rightarrow X^r$ such that $(\alpha(w), w) \in C$ for all $\omega \in \Omega$. Thus, $x^r = \alpha(\omega)$.
\end{definition} 

\begin{figure}[t]
    \centering
    \includegraphics[width=0.98\linewidth]{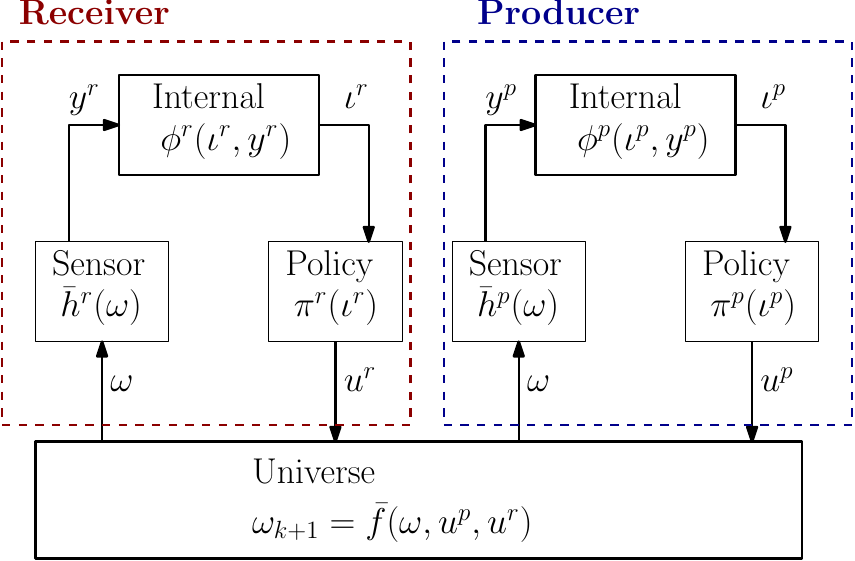}
    \caption{Producer and receiver agents coupled to the same universe.}
    \label{fig:int-ext}
\end{figure}

Because we are considering model-based internal systems, the I-state reached by interacting with the external world should be consistent with the model, that is, it should be plausible. This is captured by the next definition.

\begin{definition}[Plausibility]\label{def:plausibility} Let $M \subseteq \ifs^r \times X^r$ be a \emph{model relation}. An I-state $\is^r$ is called \emph{plausible} if there exists a $x^r \in X^r$ such that $(\is^r, x^r) \in M$. An I-state is called \emph{implausible} if it is not plausible.
\end{definition}

An illusion occurs when the receiver I-state does not correspond to the reality. This correspondence relation is defined by composing the model and correspondence relations.

\begin{definition}[Reality]\label{def:reality} Let $R \subseteq \ifs^r \times \Omega$ be a reality relation. A pair $(\is^r, \omega) \in R$ if there exists a $x^r \in X^r$ such that $(\is^r, x^r) \in M$ and $(x^r, \omega) \in C$.
\end{definition} 

Finally, the notion of illusion can be defined through the reality relation. 
\begin{definition}[Illusion]\label{def:illusion} A pair $(\is^r, \omega) \in \ifs^r \times \Omega$ is an illusion if $\is^r$ is plausible and $(\is^r, \omega) \not\in R$.
\end{definition}

To make these notions more concrete, consider the example given in Figure~\ref{fig:tower-spoofing-figure} for which $X^r=\Omega^r=\mathbb{R}^2$ and $\ifs^r = \pow(\mathbb{R}^2)$. We define the correspondence and model relations as $C = \{ (x^r,\omega) \in X^r \times \Omega^r \mid x^r=\omega\}$ and $M = \{ (\is^r,x^r) \in \ifs^r \times X^r \mid x^r \in \is^r \}$. In the example, $\is^r_k$ is a plausible I-state according to the agent's model since there exists an $x^r \in X^r$ such that $x^r \in \is^r_k$ (note that $\is^r_k$ is a singleton in the figure). However, the respective $x^r$ does not correspond to the actual state of the universe, that is, $(x^r,\omega_k)\not\in C$ since $x^r \not= w_k$. Therefore, the pair $(\is^r_k, \omega_k)$ an illusion.

\section{Optimal Control of Sensor-Induced Illusions}

In this section we introduce the problem that we address in this paper, that is, the optimal control of localization and navigation illusions for a receiver, which entails determining an optimal policy for the producer. Consequently, we describe our proposed solution considering two receiver models.

\subsection{Problem Description}\label{sec:problem_desc}

We consider a receiver moving on a plane and localizing itself using the signal intensity measurements obtained from three signal emitting towers. The producer does not have an embodiment but it can control the signal intensity of towers. 

We start with defining the extrinsic model of the receiver. The universe space is $\Omega = \R^2 \times (0,\infty)^3$. A universe state is $\omega = (\omega^r, \omega^t)$, in which $\omega^r$ is the position of the receiver with respect to a global reference frame and $\omega^t=\left(s^{(1)}, s^{(2)}, s^{(3)}\right)$ is a triple of signal intensities corresponding to the three towers at source. Let $U^r \subset \R^2$ and $U^t = \R^3$ be the set of actions corresponding to the receiver and the producer, respectively.
The universe state transitions are defined as 
\begin{equation}\label{eq:UTF}
    \omega_{k+1} = \begin{bmatrix}
        w^r_{k+1} \\ w^t_{k+1}
    \end{bmatrix} = \begin{bmatrix}
        \fb^r(\omega^r_k , u^r_k) \\ \fb^t(\omega^t_k , u^p_k)
    \end{bmatrix} = \begin{bmatrix}
        \omega^r_k + u^r_k \\ \omega^t_k + u^t_k
    \end{bmatrix},
\end{equation}
in which $u^r_k \in U^r$ and $u^t_k \in U^t$ are receiver and producer actions at stage $k$, respectively. Whereas receiver actions affect on its position, producer actions correspond to changing the signal intensity of the towers.
Consequently, $\fb = (\fb^r, \fb^t)$.
We assume that each tower has a fixed position. 
The extrinsic sensor-mapping for the receiver is 
\begin{equation}\label{eq:extr_sensor_map}
\hb^r(\omega_k)=y_k=\left(r_k^{(1)}, r_k^{(2)}, r_k^{(3)}\right),    
\end{equation}
in which $r_k^{(i)}$ is the intensity of the signal transmitted by tower $i$ measured at $\omega_k^r$ at stage $k$. 
Finally, the extrinsic model of the receiver is given as $(\Omega, \fb, U, \hb^r, Y^r)$, in which $Y^r \subseteq \R_+^3$ with $\R_+$ denoting non-negative reals and $U = U^r \times U^t$. 

The intrinsic model of the receiver is $\mathcal{M}_{U^r \rightarrow Y^r} = (X^r, f^r, h^r)$. The receiver intrinsic model assumes that the signal intensity of the towers are fixed and cannot be modified. In this case, the model corresponds to a receiver moving on a plane with $X^r = \R^2$ such that each $x^r \in X^r$ refers to the receiver position. The state transitions are defined as 
\begin{equation*}
    x^r_{k+1}=f^r(x^r_k, u^r_k)= x^r_k + u^r_k.
\end{equation*}
The intrinsic sensor-mapping assumes a fixed signal intensity at source for each tower that the sensor is calibrated to. Let $s_c=\left(s_c^{(1)}, s_c^{(2)}, s_c^{(3)}\right)$ be the vector of signal intensities used for calibration. The sensor-mapping for the intrinsic model is defined as $h^r(x_k^r)=y_k^r=\left(r_k^{(1)}, r_k^{(2)}, r_k^{(3)}\right)$ such that for $i=1,2,3$
\begin{equation}\label{eq:intensity_calibrated}
    r_k^{(i)} = \frac{s_c^{(i)}}{1+\left(\norm{t^{(i)} - x_k^r}\right)^2}.
\end{equation}

Given its intrinsic model $\mathcal{M}_{U^r \rightarrow Y^r}$, the receiver is the agent 
\begin{equation}\label{eq:receiver_model}
\A^r = (\mathcal{M}_{U^r \rightarrow Y^r}, \ifs^r, U^r, Y^r, \phi^r, \pi^r),
\end{equation}
in which, $\ifs^r \subseteq \pow(X^r)$. The ITF $\phi^r$ is given as 
\begin{equation}\label{eq:ITF_simple_rec}
     \is_{k} = \phi^r(\is^r_{k-1}, y^r_k) = (h^r)^{-1}(y^r_k),
\end{equation}
that is the state-estimate based on triangulation. Depending on the observation received, $\is_k$ is either a singleton or it is the empty set. 

The model relation $M \subseteq \ifs^r \times X^r$ is defined so that a pair $(\is^r, x^r) \in M$ if $x^r \in \is^r$. 
Therefore, an I-state $\is_k$ is \emph{plausible} if it is a singleton, that is, it is nonempty.
The following proposition specifies the conditions for an observation so that it results in a plausible I-state.
\begin{proposition}\label{prop:circles}
The preimage of $y^r_k$ under $h^r$ is a singleton if and only if $r_k^{(i)} \leq s_{c}^{(i)}$ for all $i=1,2,3$ and the intersection of the three circles centered at $t^{(i)}$ with radius $d_i = \sqrt{{s_c^{(i)}}/{r_k^{(i)}}-1}$ for $i=1,2,3$ is not empty. Otherwise, it is the empty set.
\end{proposition}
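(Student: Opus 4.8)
The plan is to reduce the preimage $(h^r)^{-1}(y^r_k)$ to a system of three distance constraints and then invoke standard trilateration geometry.

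First I would unwind the definition of $h^r$ in \eqref{eq:intensity_calibrated}: a point $x^r \in \R^2$ belongs to $(h^r)^{-1}(y^r_k)$ exactly when $r^{(i)}_k\bigl(1 + \norm{t^{(i)} - x^r}^2\bigr) = s_c^{(i)}$ for all $i \in \{1,2,3\}$. If some $r^{(i)}_k = 0$ this equation has no solution (its right-hand side is positive), so the preimage is empty; hence I may assume $r^{(i)}_k > 0$ for all $i$, in which case the $i$-th equation is equivalent to $\norm{t^{(i)} - x^r}^2 = s_c^{(i)}/r^{(i)}_k - 1$. The right-hand side is nonnegative if and only if $r^{(i)}_k \le s_c^{(i)}$; so if $r^{(i)}_k > s_c^{(i)}$ for some $i$ the corresponding constraint is infeasible and the preimage is empty. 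This disposes of one half of the ``otherwise'' clause.

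Next, assuming $r^{(i)}_k \le s_c^{(i)}$ for every $i$, the $i$-th equation becomes $\norm{t^{(i)} - x^r} = d_i$ with $d_i = \sqrt{s_c^{(i)}/r^{(i)}_k - 1} \ge 0$, i.e.\ $x^r$ lies on the circle $\mathcal{C}_i$ centered at $t^{(i)}$ of radius $d_i$ (a degenerate singleton $\{t^{(i)}\}$ when $d_i = 0$). Therefore $(h^r)^{-1}(y^r_k) = \mathcal{C}_1 \cap \mathcal{C}_2 \cap \mathcal{C}_3$. If this intersection is empty the preimage is empty, which is the remaining half of the ``otherwise'' clause; if it is nonempty it remains only to show it is a singleton. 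Combined with the previous paragraph this also yields the forward direction of the equivalence: if $(h^r)^{-1}(y^r_k)$ is a singleton it is in particular nonempty, so some $x^r$ satisfies all three equations, forcing $r^{(i)}_k \le s_c^{(i)}$ for all $i$, and then the preimage coincides with the nonempty circle intersection.

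It remains to argue that a nonempty $\mathcal{C}_1 \cap \mathcal{C}_2 \cap \mathcal{C}_3$ is a single point, and here I would use that the tower positions $t^{(1)}, t^{(2)}, t^{(3)}$ are distinct and non-collinear — a standing assumption implicit in the receiver's ability to localize by trilateration. Two circles with distinct centers meet in at most two points, and when they meet in exactly two these points are reflections of one another across the line through the two centers; a circle centered at a point $t^{(3)}$ off that line cannot contain both, since that would place $t^{(3)}$ on their perpendicular bisector, i.e.\ on the line through $t^{(1)}$ and $t^{(2)}$. Hence $\mathcal{C}_1 \cap \mathcal{C}_2 \cap \mathcal{C}_3$ contains at most one point, so when nonempty it is a singleton. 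The main obstacle is precisely this geometric step: the stated equivalence fails without non-collinearity of the towers (three circles sharing two common points would give a two-point preimage), so the proof must make that assumption explicit and also handle the degenerate radii $d_i = 0$ as above; everything else is routine algebra on \eqref{eq:intensity_calibrated}.
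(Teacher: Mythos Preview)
Your proposal is correct and follows the same route as the paper's own argument: rewrite \eqref{eq:intensity_calibrated} as distance constraints, observe that $r_k^{(i)} > s_c^{(i)}$ forces an impossible negative squared distance, identify the preimage with the intersection of the three circles, and invoke a general-position assumption on the towers to conclude the intersection has at most one point. The paper presents only a sketch and simply asserts ``three circles intersect at most at one point, assuming general position,'' whereas you spell out the reflection-across-the-line-of-centers argument, handle the $r_k^{(i)}=0$ and $d_i=0$ degeneracies, and make the non-collinearity hypothesis explicit; these are genuine improvements in rigor but not a different method.
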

\begin{proof}(Sketch of proof)
To have $r_k^{(i)} > s_{c}^{(i)}$, $\left(\norm{t^{(i)} - x_k^r}\right)^2 < 0$ needs to be satisfied. However, there are no elements of $X^r$ that can satisfy this inequality, leading to an empty set. Given $y_k^r=\left(r_k^{(1)}, r_k^{(2)}, r_k^{(3)}\right)$, the set of $x_k^r$ that satisfy \eqref{eq:intensity_calibrated} for $r_k^{(i)}$ forms a circle. Therefore, the set of $x_k^r$ that satisfy \eqref{eq:intensity_calibrated} for $i=1,2,3$ is either unique (it is a singleton) or a solution does not exist since three circles intersect at most at one point, assuming general position. \qed
\end{proof}

The receiver's objective is to reach a goal position $x_G^r \in X^r$ and assuming that $y^r_k$ satisfies that its preimage under $h^r$ is a singleton, it follows a simple proportional control policy, that is,  
\begin{equation}\label{eq:receiver_policy}
    \pi^r(\is_k^r) = K_r (x_G^r - \is_k^r),
\end{equation}
in which $K_r$ is a scalar corresponding to controller gain.
Here, with an abuse of previously introduced notation we take $\is^r_k = \{ x^r_k\}$ as $\is^r_k = x^r_k$.

We assume that the producer is omniscient according to Definition~\ref{def:producer} meaning that its intrinsic model coincides with its extrinsic model such that $\M_{U^t \rightarrow Y^p}=(\Omega, \fb, \hb^p)$, in which $\hb^p(\omega)=y^p_k=\omega_k$ and its ITF is defined as $\is^p_k=\phi^p(\is^p_{k-1},y_k^p = \omega_k)=\omega_k$.
The producer is then expressed as the agent 
\begin{equation}\label{eq:producer_model}
    \mathcal{A}^p=(\M_{U^t \rightarrow Y^p}, \ifs^p=\Omega, U^t, Y^p=\Omega, \phi^p, \pi^p).
\end{equation}
Given this formulation, we wish to design a producer policy $\pi^p : \ifs^p \rightarrow U^t$ such that it is a solution to the following Problem~\ref{prob:prob_1}.

\begin{problem}[Regulating localization and navigation illusions]\label{prob:prob_1}
Given a receiver $\mathcal{A}^r$ as defined in \eqref{eq:receiver_model} and a producer $\mathcal{A}^p$ as defined in \eqref{eq:producer_model}, determine a policy $\pi^p$ such that for some $N$
\begin{itemize}
    \item the resulting universe state trajectory following policies $\pi^r$ and $\pi^p$ satisfies that $\omega^r_N = x_G^p$,
    \item the resulting receiver I-state trajectory $\tilde{\is}^r=(\is^r_1, \dots, \is^r_N)$ satisfies that for all $k=1,\dots, N$, $\is^r_k$ is plausible according to Definition~\ref{def:plausibility} and $\is^r_N=x_G^r$,
    \item $\pi^p$ minimizes an appropriate cost function.
\end{itemize}
\end{problem}

\subsection{Proposed Solution}\label{sec:prop_soln}

Our proposed solution relies on a producer intrinsic model that is equivalent to \eqref{eq:UTF}. To start with, for all $\is_k^r \not= \emptyset$, there exists a bijective map between the receiver I-state $\is_k^r$ and the tower signal intensities $\omega^t_k = (s^{(1)}_k, s^{(1)}_k, s^{(1)}_k)$ such that $\is_k^r \mapsto \omega^t_k$ with
\begin{equation}\label{eq:Istate_to_intensity}
s_k^{(i)} = \frac{s_c^{(i)}\left(1+\left(\norm{t^{(i)} - \omega_k^r}\right)^2\right)}{1+\left(\norm{t^{(i)} - \is_k^r}\right)^2}.
\end{equation}
Then, the system in \eqref{eq:UTF} can be rewritten as 
\begin{equation}\label{eq:affine_model_wur}
\begin{aligned}
\omega^r_{k+1} &= \omega^r_k + u^r_k \\
\iota^r_{k+1} &= \iota^r_k + u^p_k,
\end{aligned}
\end{equation}
in which $u^p_k$ is the producer action corresponding to a change in the receiver I-state and relates to $u^t_k$ similarly through \eqref{eq:Istate_to_intensity}. Let $U^p$ be the set of all possible producer actions $u_k^p$. Since $U^t$ was unbounded, $U^p$ is unbounded as well, then
the following proposition holds. 
\begin{proposition}\label{prop:control_forIstate}
There exists $u^p_k \in U^p$ for any $\is_k^r \in \ifs^r$ satisfying $\is_k^r \not= \emptyset$ such that $\is_{k+1}^r$ is plausible. 
\end{proposition}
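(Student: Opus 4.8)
The plan is to read the statement off the decoupled reformulation \eqref{eq:affine_model_wur}, in which the producer action $u^p_k$ is exactly the increment of the receiver I-state, $\iota^r_{k+1} = \iota^r_k + u^p_k$, and $U^p$ has already been observed to be unbounded. In this model an I-state is always either empty or a single point, and by the remark following \eqref{eq:ITF_simple_rec} a single point is automatically plausible, so it suffices to show that from any nonempty $\iota^r_k = \{p_k\}$ the producer can realise an I-state equal to some prescribed point $q \in \R^2$. The candidate action is then $u^p_k = q - p_k$ (under the identification of a singleton I-state with its point, as in \eqref{eq:receiver_policy}), and the only real work is (i) to certify that this $u^p_k$ lies in $U^p$, and (ii) to certify that the I-state actually produced equals the singleton $\{q\}$.

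For (i), fix an arbitrary $q \in \R^2$, let $\omega^r_{k+1} = \omega^r_k + u^r_k$ be the (externally determined) stage-$(k+1)$ receiver position, and define the tower intensities by \eqref{eq:Istate_to_intensity} with $\iota^r_{k+1}$ in place of $\iota^r_k$, that is $s^{(i)}_{k+1} = s_c^{(i)}(1+\norm{t^{(i)}-\omega^r_{k+1}}^2)/(1+\norm{t^{(i)}-q}^2)$ for $i=1,2,3$. Since $s_c^{(i)}>0$ and both the numerator and the denominator are at least $1$, each $s^{(i)}_{k+1}$ is strictly positive, so $\omega^t_{k+1}\in(0,\infty)^3$, the increment $u^t_k = \omega^t_{k+1}-\omega^t_k$ lies in $U^t = \R^3$, and via the bijection \eqref{eq:Istate_to_intensity} it corresponds to an admissible $u^p_k\in U^p$ (consistent with the earlier observation that $U^p$ is unbounded).

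For (ii), I would verify the two hypotheses of Proposition~\ref{prop:circles} at stage $k+1$. The receiver measures $r^{(i)}_{k+1} = s^{(i)}_{k+1}/(1+\norm{t^{(i)}-\omega^r_{k+1}}^2) = s_c^{(i)}/(1+\norm{t^{(i)}-q}^2) \le s_c^{(i)}$, which is the first hypothesis; and the three circles of Proposition~\ref{prop:circles} then have centre $t^{(i)}$ and radius $d_i = \sqrt{s_c^{(i)}/r^{(i)}_{k+1}-1} = \norm{t^{(i)}-q}$, so each passes through $q$ and their intersection is nonempty, the second hypothesis. Proposition~\ref{prop:circles} then yields that $\iota^r_{k+1}=(h^r)^{-1}(y^r_{k+1})$ is a singleton, which must be $\{q\}$ since $q$ belongs to the intersection, and hence $\iota^r_{k+1}$ is plausible. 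I expect the main (mild) obstacle to be exactly this step: plausibility has to be argued through the two circle conditions of Proposition~\ref{prop:circles} rather than by inspecting $h^r$ directly; once phrased that way, the argument needs no general-position assumption on the towers, since Proposition~\ref{prop:circles} itself converts the two verified hypotheses into the desired singleton.
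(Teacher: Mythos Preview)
The paper does not give a proof of this proposition at all: it is stated immediately after the observation that $U^t$ is unbounded and hence $U^p$ is unbounded, and is then followed by the one-line remark ``Therefore, any plausible I-state can be generated with an appropriate selection of producer actions.'' Your proposal is a correct and fully worked-out version of exactly that implicit argument --- you make explicit the construction of the tower intensities via \eqref{eq:Istate_to_intensity}, verify that the resulting $u^t_k$ lies in $U^t=\R^3$, and then close the loop through Proposition~\ref{prop:circles} to confirm that the measured observation really produces the singleton $\{q\}$. In particular, your step (ii) supplies the detail the paper omits entirely, namely that the engineered intensities actually satisfy both hypotheses of Proposition~\ref{prop:circles}; the paper simply takes this for granted once the bijection \eqref{eq:Istate_to_intensity} has been written down.
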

Therefore, any plausible I-state can be generated with an appropriate selection of producer actions.
Plugging the receiver policy given in \eqref{eq:receiver_policy} into \eqref{eq:affine_model_wur} we obtain
\begin{equation}\label{eq:affine_model}
\begin{aligned}
\omega^r_{k+1} &= \omega^r_k + K_r (x_G^r - \is_k^r) \\
\iota^r_{k+1} &= \iota^r_k + u^p_k.
\end{aligned}
\end{equation}
Consequently, a change of variables by considering the error in the receiver's I-state with respect to its goal $e_k = x^r_G - \iota^r_k$ results in the following linear system
\begin{equation}\label{eq:lin_model}
\begin{aligned}
        \omega^r_{k+1} &= x^r_k + K_r e_k \\
        e_{k+1} &= e^r_k - u^p_k
\end{aligned}
\end{equation}
that can be written as 
\begin{equation}\label{eq:lin_model_AB}
\omega'_{k+1}=
\begin{bmatrix}
\omega^r_{k+1} \\ e_{k+1}
\end{bmatrix} = 
A \begin{bmatrix}
\omega^r_{k+1} \\ e_{k+1}
\end{bmatrix} + B u^p_k,
\end{equation}
in which,
\begin{equation*}
    A = \begin{bmatrix}
        I & K_rI \\
        0 & I
    \end{bmatrix}, \quad
    B = \begin{bmatrix}
        0 \\
        -I
    \end{bmatrix},
\end{equation*}
and $I$ and $0$ denote appropriately sized identity and zero matrices, respectively. The controllability matrix for \eqref{eq:lin_model_AB} is given by 
$$\text{Co} = (B \mid AB \mid A^2B \mid A^{3}B)$$ 
which has full rank for $K_r \neq 0$. Therefore, the pair $(A,B)$ is controllable for receiver policies satisfying this condition. 

Considering this model, the producer's extrinsic model is now $(\Omega', \fb', U^p, \hb', \Omega')$, in which $\Omega' = \R^2 \times (\ifs^r \setminus \emptyset)$ is the set of states and $\fb'$ is the state transition function for the system in \eqref{eq:affine_model}. We exclude pairs corresponding I-states that are empty from the state space since due to Proposition~\ref{prop:control_forIstate} they are not realized. Finally, $\hb'$ is an identity map. Considering the omniscient producer defined in Section~\ref{sec:problem_desc} under this new universe formulation, we wish to find a producer policy $\pi^p : \ifs^p \rightarrow U^p$ with $\ifs^p = \Omega'$
that solves Problem~\ref{prob:prob_1}. Because any element of $U^p$ can be uniquely mapped to an element of $U^t$, we chose not to introduce new notation and not to rewrite the problem.  

To simplify the notation used, without loss of generality, we assume that the producer goal is $\omega'^p_G=[0,0]^T$. 
The selected cost formulation
is given by
\begin{equation}\label{eq:cost_func}
    J= \frac{1}{2}\sum_{k=1}^\infty \begin{bmatrix}
\omega^r_{k} & e_{k}
\end{bmatrix} Q \begin{bmatrix}
\omega^r_{k} \\ e_{k}
\end{bmatrix} + u^{pT}_k R u^p_k,
\end{equation}
in which $Q$ and $R$ are symmetric positive-definite weight matrices determining the relative importance of each cost component. 

The producer policy $\pi^p$ that minimizes the cost given in \eqref{eq:cost_func} is 
\begin{equation}
    \pi^p(\omega'_k) = - K_p \omega'_k,
\end{equation}
with
    $K_p= (B^TPB+R)^{-1}B^TPA,$
in which $P$ is the solution to the Discrete Algebraic Riccati Equation:
\begin{equation*}
P = Q + A^T\left( P - PB(R + B^TPB)^{-1}B^TP\right)A.
\end{equation*}

Given that the pair $(A, B)$ is controllable and that the sensor mapping $\hb' $ is an identity map so that the system is observable, the closed-loop system with policy $\pi^p$ is asymptotically stable. 
Therefore, minimizing the cost in \eqref{eq:cost_func}, the resulting universe-state trajectory will converge to $(\omega_N^r = x_G^p, \is^r_N = x^r_G)$ as $N \rightarrow \infty$. Furthermore, any I-state along the resulting trajectory $\tilde{\is^r}=(\is_1, \is_2, \dots)$ is plausible by definition. Then, $\pi^p$ is an optimal solution to Problem~\ref{prob:prob_1}. 

Let $C \subset X^r \times \Omega$ be a correspondence relation (Definition~\ref{def:correspondence}) defined as $(x_r, \omega) \in C$ if  $x^r = \omega^r$. Then $(x_G^p, x^r_G)$ is an illusion if $x_G^p \not= x_G^r$ according to Definition~\ref{def:illusion}.

\subsection{Extending to advanced receivers}\label{sec:advanced_rec}
The receiver formulation in Section~\ref{sec:problem_desc} allows any producer action as long as the next I-state is a singleton. Therefore, it allows large ``jumps'' in receiver's I-state such that the distance between two consecutive position-estimates can be arbitrarily large.
In this section, we consider advanced receivers which explicitly take a motion model into account in their ITF. 

We consider nondeterministic disturbances, which come from a compact set $\Theta$ with no associated statistical information. 
The intrinsic model of the receiver $\mathcal{M}_{U^r \rightarrow Y^r} = (X^r, f^r, h^r)$ is defined similarly in Section~\ref{sec:problem_desc} with the exception that
\begin{equation}\label{eq:XTF_advanced}
    f^r(x_k^r, u_k^r) = \{ x^r_{k} + \theta \mid x^r_{k} \in X^r \land \theta \in \Theta \}, 
\end{equation}
in which $\Theta = [\theta_{min}, \theta_{max}]$ 
is a set of disturbances. 
Informally, this motion model means that given the current state $x^r_k$ of the receiver, the next state $x^r_{k+1}$ is constrained to a box centered at $x^r_k$ indicating uncertainties in the motion model and limits on receiver actions. 

Recall that $\ifs^r \subseteq \pow(X^r)$.
The intrinsic model with XTF given in \eqref{eq:XTF_advanced} can be taken into account within the receiver ITF by modifying \eqref{eq:ITF_simple_rec} such that  
\begin{equation}\label{eq:ITF_advanced}
    \is^r_k = \phi^r(\is^r_{k-1}, y^r_k) = (h^r)^{-1}(y_k^r) \cap \{ \is_{k-1}^r + \theta \mid \theta \in \Theta \},
\end{equation}
assuming $\is_{k-1}^r$ is plausible, that is, it is not empty. Similar to Section~\ref{sec:problem_desc}, we take $\is^r_k=\{x^r_k\}$ as $\is^r_k=x^r_k$ if it is not empty, and by Proposition~\ref{prop:circles} it is a singleton. If $\is_{k-1} = \emptyset$, then $\is_i = \emptyset$ for all $i > k-1$. 

Whereas it was possible to generate any plausible $\is_k^r$ in Section~\ref{sec:prop_soln} (see Proposition~\ref{prop:control_forIstate}), this is no longer possible with the receiver ITF given in \eqref{eq:ITF_advanced}. 
However, the plausibility can be taken into account by introducing limits on producer actions by imposing that the producer actions need to satisfy for all $k=1, 2 \dots$ that $ u_k^p \in \Theta$.
The resulting constrained optimal control problem can be solved in a receding horizon manner. To this end, we define the following constrained optimization problem at stage $k$, that is, \textrm{C-OPT}$(k, N)$:
\begin{equation}\label{eq:C-OPT}
\begin{aligned}
\min_{u^p_k, \dots, u^p_{k+N}} \quad & \frac{1}{2}\sum_{i=k}^{k+N} \begin{bmatrix}
\omega^r_{i+1} & e_{i+1}
\end{bmatrix} Q \begin{bmatrix}
\omega^r_{i+1} \\ e_{i+1}
\end{bmatrix} + (u^{p}_k)^T R u^p_k,\\
\textrm{s.t.} \quad & \omega'_{i+1} = A\omega'_i + Bu^p_i \quad \text{starting from $\omega'_k$}\\
  &    \theta_{\mathrm{min}} \leq u^p_i \leq \theta_{\mathrm{max}}\\ 
  & \forall i=k, \dots, k+N.
\end{aligned}
\end{equation}
Notice that \textrm{C-OPT}$(k,N)$ is a quadratic program which can be solved efficiently with the available tools. 
Now, the producer policy $\pi^p$ corresponds to an algorithm given in (Algorithm~\ref{alg:policy}). Starting from an initial state $\omega'_1$, the policy dictates solving \textrm{C-OPT}$(k,N)$ until a termination condition is reached. A termination condition could be that $\omega'_k$ is sufficiently close to the goal. 

\begin{algorithm}[t]
\caption{Producer policy}\label{alg:policy}
\begin{algorithmic}
\State $k \gets 1$
\While{!\textrm{Termination Condition}}
\State $(u_k, \dots, u_{k+N}) \gets \text{Solution to \textrm{C-OPT}$(k,N)$}$ 
\State Apply $u_k$ 
\State $k \gets k+1$
\EndWhile
\end{algorithmic}
\end{algorithm}

\section{Numerical Results}

In this section we present two numerical examples to show the effectiveness of the proposed approach considering the simple and advanced receiver formulations given in Sections~\ref{sec:problem_desc} and \ref{sec:advanced_rec}, respectively. The proposed approach is implemented in Python programming language, version 3.12. The optimization problem given in \eqref{eq:C-OPT} is solved using the tools from \texttt{cvxopt} package~\cite{andersen2021cvxopt}. The numerical examples had as termination condition the receiver being sufficiently close to its own goal, implemented with the predicate $\norm{e_k} < 0.005$.

\begin{figure}
\includegraphics[width=0.98\columnwidth]{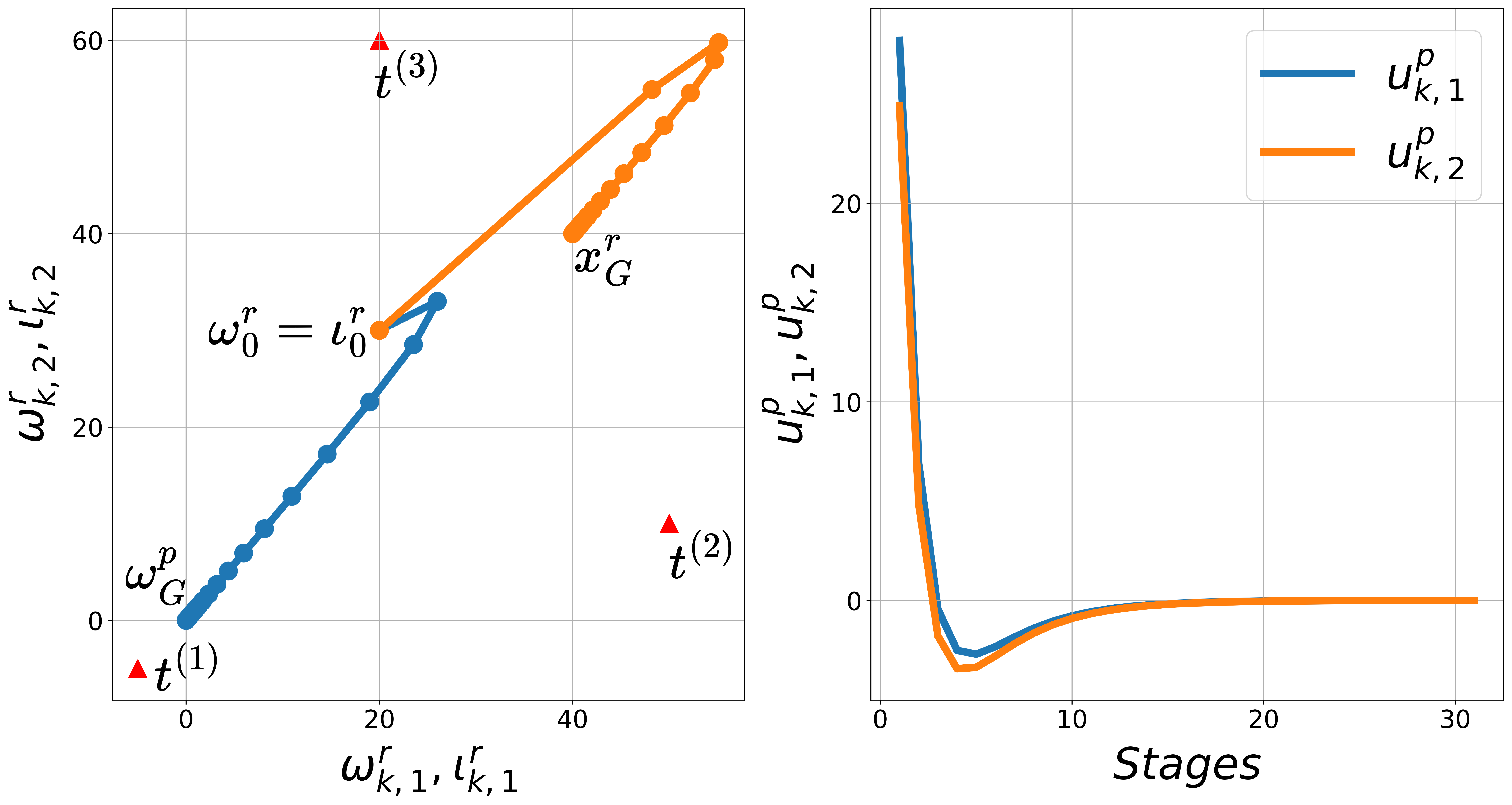}
\caption{Simulation results considering a simple receiver. (Left) Trajectories corresponding to receiver position $w^r_k$ (blue) and receiver I-state $\is_k^r$ (orange). (Right) Producer actions $u^p_k = [u^p_{k,1} \: u^p_{k,2}]^T$ as a function of stages.}
\label{fig:lqr-plots}
\end{figure}

\subsection{A simple receiver}
The first example considers a simple receiver described in Section~\ref{sec:problem_desc}. The simulation is performed considering three towers placed at positions $t^{(1)} = [{-5} \; {-5}]^T$, $t^{(2)} = [50 \; 10]^T$, $t^{(3)} = [20 \; 60]^T$. The receiver's and producer's goal are set to $x^r_G = [40 \; 40 ]^T$ and $\omega'^p_G = [0 \; 0]^T$, respectively.
The weight matrices in \eqref{eq:cost_func} are $Q = I_{4 \times 4}$ and $R = I_{2\times 2}$. The proportional gain in the receiver policy is set as $K_r = 0.3$. The optimal gain matrix in producer policy $\pi^p = -K_p \omega'_k $ is computed as 
\begin{equation*}
    K_p = \begin{bmatrix}
        -0.54 &  0 & -0.87 &  0 \\
         0 & -0.54 &  0 &         -0.87
    \end{bmatrix},
\end{equation*}
reported with each matrix entry rounded to 2 decimal digits.

Fig.~\ref{fig:lqr-plots} shows the evolution of the receiver system under producer policy $\pi^p$ starting from the initial condition $\omega^r_1 =\iota^r_1 = [20 \; 30 ]^T$ and $e_1 = x^r_G - \iota^r_0 = [20 \; 10]^T$. 

\subsection{An advanced receiver}

An advanced receiver as described in Section~\ref{sec:advanced_rec} is considered under the same setting. The only difference with respect to the previous example is that now to ensure that receiver I-states are plausible, the producer actions need to be constrained within the set $\Theta = [1\;1] \times [1\;1]$. We used $N=10$ as the number of look-ahead stages for the convex optimization \textrm{C-OPT}$(k,N)$. 

Fig.~\ref{fig:qp-plots} shows the evolution of the receiver position and its I-state. 
Also in this case, the desired receiver behavior is achieved. However, due to the introduced constraints it takes significantly longer for the producer to achieve its goal. In this example, the termination is reached at stage $64$ as opposed to the previous example for which $31$ stages were sufficient.

\begin{figure}
\centering
\includegraphics[width=\columnwidth]{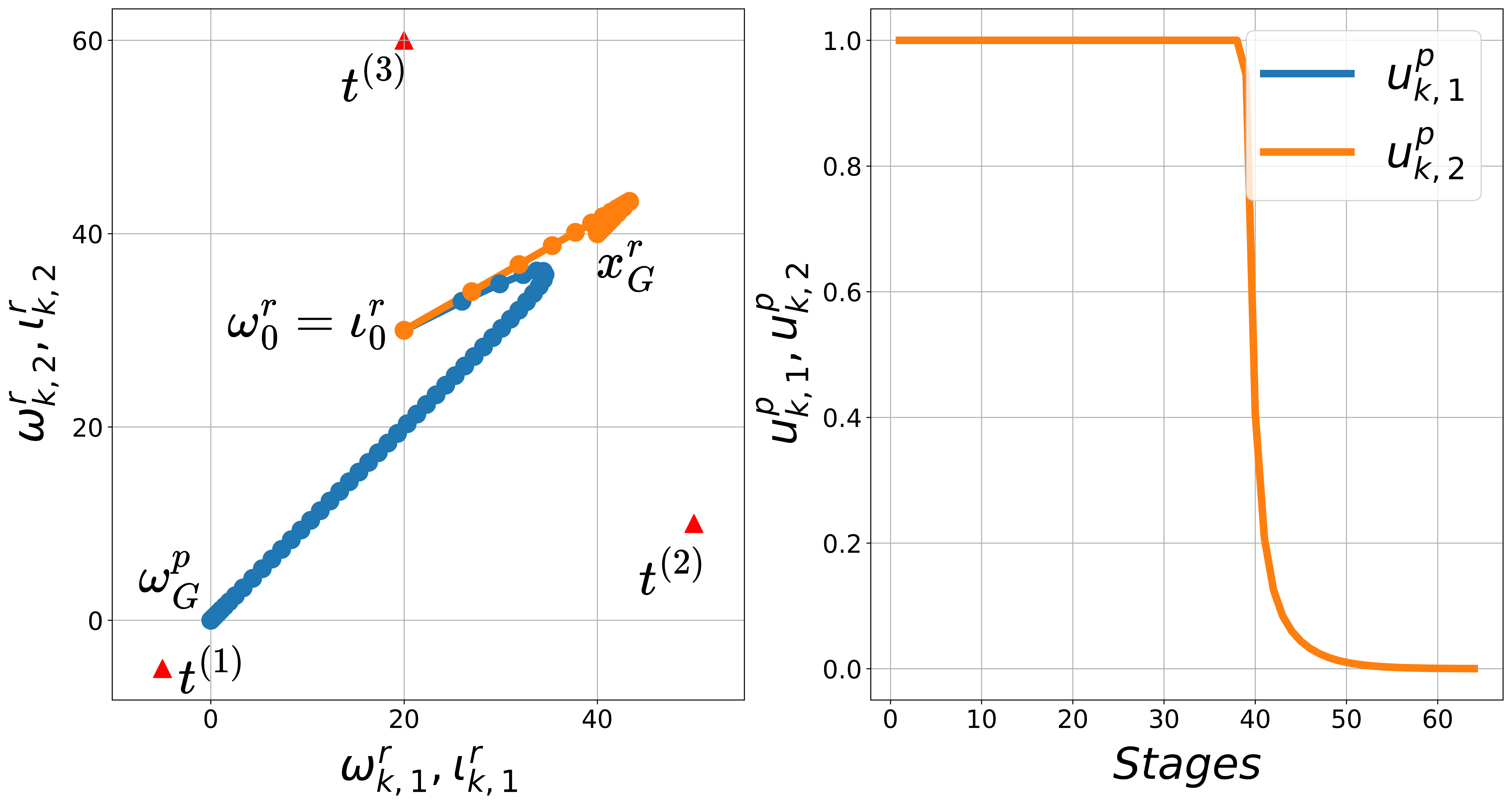}
\label{fig:constrained-qp-sim}
\caption{Simulation results considering an advanced receiver. (Left) Trajectories corresponding to receiver position $w^r_k$ (blue) and receiver I-state $\is_k^r$ (orange). (Right) Producer actions $u^p_k = [u^p_{k,1} \: u^p_{k,2}]^T$ as a function of stages.}
\label{fig:qp-plots}
\end{figure}

\section{CONCLUSION}
In this work, we have considered designing a producer that is able to regulate the perceptual experience of a receiver agent so that the latter's actions achieve the producer's goal rather than its own. The novel mathematical formulation of the interaction between a receiver and producers as a control system, and posing the problem of regulating the receiver perceptual experience (its I-state) as a control problem enabled the use of standard methods.
A first extension of the proposed models comes in the form of trajectory tracking: rather than driving the state of the system to the origin, the objective can be to have it follow a reference trajectory $(\omega^{\circ}_1, \omega^{\circ}_2, \dots, \omega^{\circ}_{N+1})$ with as little error as possible. This can be accomplished by a simple modification in the cost function \eqref{eq:cost_func}.

Our formulation is general and allows considering different receiver and producer agents. Future work includes considering more advanced receivers that potentially take into account more accurate motion models or even potential producers within their intrinsic model. Therefore, further constraining the plausible set of I-states and consequently, producer actions. Another direction is to take sensing uncertainties into account in the receiver model, in this case, the preimage of an observation coming from a tower is an annulus instead of a circle. 
Extensions to probabilistic I-states is also possible, and already considered in \cite{LaValle-Big-PE}. However, probabilistic models introduce additional complications in terms of determining what is plausible especially if the I-states correspond to a probability distribution over an unbounded set resulting in any one of those to be plausible. In this case, introducing thresholds or considering distributions with bounded support can be explored. Some progress along this direction was made in \cite{Medici-Thesis}.

Another interesting direction is to consider weaker producers, that is, producers that are not omniscient. In this case, the intrinsic model of the producer differs from its extrinsic model, leading to a partially observable system. Therefore, determining a producer policy with limited information and analyzing the stability of the closed-loop system opens up interesting areas to explore.

\bibliographystyle{IEEEtran}
\bibliography{root}

\end{document}